\pgfplotsset{compat=1.16}
\newtheorem{theorem}{Theorem}
\newtheorem{lemma}{Lemma}
\newcommand{\partitle}[1]{\noindent\textbf{#1}\xspace}
\newcommand{\mpole}{MP}
\title{Generative Correlation Manifolds: Generating Synthetic Data with Preserved Higher-Order Correlations}
\author{Jens E. d'Hondt}
\email{j.e.d.hondt@tue.nl}
\affiliation{%
\institution{Eindhoven University of Technology}
\city{Eindhoven}
\country{the Netherlands}
}
\author{Wieger R. Punter}
\email{w.r.punter@tue.nl}
\affiliation{%
\institution{Eindhoven University of Technology}
\city{Eindhoven}
\country{the Netherlands}
}
\author{Odysseas Papapetrou}
\email{o.papapetrou@tue.nl}
\affiliation{%
\institution{Eindhoven University of Technology}
\city{Eindhoven}
\country{the Netherlands}
}
\date{July 3th, 2025}
\begin{document}

\begin{abstract}
    The increasing need for data privacy and the demand for robust machine learning models have fueled the development of synthetic data generation techniques. 
    However, current methods often succeed in replicating simple summary statistics but fail to preserve both the pairwise and higher-order correlation structure of the data that define the complex, multi-variable interactions inherent in real-world systems. 
    This limitation can lead to synthetic data that is superficially realistic but fails when used for sophisticated modeling tasks. 
    In this white paper, we introduce Generative Correlation Manifolds (GCM), a computationally efficient method for generating synthetic data. 
    The technique uses Cholesky decomposition of a target correlation matrix to produce datasets that, by mathematical proof, preserve the entire correlation structure -- from simple pairwise relationships to higher-order interactions -- of the source dataset. 
    We argue that this method provides a new approach to synthetic data generation with potential applications in privacy-preserving data sharing, robust model training, and simulation.
\end{abstract}

\maketitle

\begin{figure}[t]
    \centering
    \begin{tikzpicture}[
  scale=0.9,
  every node/.style={font=\small},
  var/.style={circle, draw, fill=blue!10, minimum size=0.8cm, inner sep=0pt},
  edge/.style={thick, blue!70},
  hedge/.style={draw=none, fill=orange!30, opacity=0.5}
]

\coordinate (leftcenter) at (-2.5,0);
\node[var] (A1) at ($(leftcenter)+(0,1.2)$) {A};
\node[var] (B1) at ($(leftcenter)+(-1,-0.8)$) {B};
\node[var] (C1) at ($(leftcenter)+(1,-0.8)$) {C};

\draw[edge] (A1) -- (B1);
\draw[edge] (B1) -- (C1);
\draw[edge] (A1) -- (C1);

\node at ($(leftcenter)+(0,-2.1)$) {\textbf{Pairwise Correlations (2nd-Order)}};

\coordinate (rightcenter) at (2.5,0);
\node[var] (A2) at ($(rightcenter)+(0,1.2)$) {A};
\node[var] (B2) at ($(rightcenter)+(-1,-0.8)$) {B};
\node[var] (C2) at ($(rightcenter)+(1,-0.8)$) {C};

\filldraw[hedge] (A2.center) -- (B2.center) -- (C2.center) -- cycle;
\draw[thick, orange!70!black] (A2.center) -- (B2.center) -- (C2.center) -- cycle;

\node at ($(rightcenter)+(0,-2.05)$) {\textbf{3rd-Order Correlation}};

\end{tikzpicture}
    \caption{Illustration of the different orders of correlation.}
    \label{fig:order}
\end{figure}

\section{Introduction}
In an era dominated by data-driven discovery, access to high-quality data is paramount~\cite{lu2025machinelearningsyntheticdata,sorscher2023neuralscalinglawsbeating}. 
Yet, this access is often restricted by critical privacy regulations (e.g., GDPR, HIPAA) and the inherent scarcity of data in many specialized domains. 
Synthetic data generation offers a compelling solution, promising to provide statistically representative surrogates without exposing sensitive information or requiring new data collection~\cite{assefa21,bauer2024comprehensiveexplorationsyntheticdata}.

The central challenge, however, lies in the definition of "statistically representative". 
Most generative methods are validated by their ability to match the summary statistics of a source dataset~\cite{sichani24,figueira22}.
While valuable, this is an incomplete measure of accuracy.
Real-world phenomena are rarely governed by univariate statistics or even pairwise relationships; they are driven by a web of intricate, multi-variable dependencies. 
In this work, we focus specifically on higher-order Pearson correlations -- defined as correlations between means of variable groups -- which capture one important aspect of these multi-variable dependencies~\cite{Nguyen2014b,heikinhemo07,knobbe06,carlborg04,mitra16,licher19,chen12,perlin06,agrawal17,agrawal20,santoro23,Zhang2008}.
For example, (a) financial markets display cascading correlation effects during crisis periods~\cite{longin01}, (b) biological systems contain complex gene regulatory networks with multi-order interactions~\cite{Barabasi2004}, and (c) social networks exhibit intricate relationship structures that cannot be captured by simple pairwise correlations~\cite{newman03}.
When synthetic data fails to preserve these structures, downstream applications suffer from reduced model performance, biased statistical analyses, and compromised privacy guarantees.

This white paper introduces a novel approach, \textbf{Generative Correlation Manifolds (GCM)}, that directly addresses this challenge. 
Our method builds upon our previous work on multivariate correlation discovery~\cite{vldbj23} and extends it to synthetic data generation.
Particularly, we prove that preserving pairwise correlations is mathematically sufficient to maintain all higher-order Pearson correlations. 
While other forms of multi-variable dependencies (e.g., mutual information or tensor-based decompositions) may exist, our method focuses specifically on this well-defined correlation structure.
Based on this insight, our method leverages Cholesky decomposition to generate synthetic data that is guaranteed to preserve the complete Pearson correlation hierarchy of a source dataset in a computationally efficient way. 
The implementation of GCM is available as open source software at \url{https://github.com/JdHondt/gcm}.

\section{Related Work}
Current synthetic data generation approaches fall into two main categories: deep learning-based methods and traditional statistical approaches. 
We briefly review these methods and highlight their limitations in capturing complex interactions.

\paragraph{Deep Learning-Based Methods.} 
Synthetic data generation based on state-of-the-art deep learning methods has recently emerged as a promising solution to replace the expensive and laborious collection of real data. Generative Adversarial Networks (GANs) and Variational Autoencoders (VAEs) have shown promise in generating realistic synthetic datasets~\cite{goodfellow2014generativeadversarialnetworks,kingma2022autoencodingvariationalbayes}. 
However, these methods offer no mathematical guarantees regarding the preservation of relationships between features, particularly higher-order correlations.
More recently, transformer-based architectures like GPT and BERT variants have been adapted for synthetic data generation~\cite{brown2020languagemodels}. While these models excel at capturing sequential patterns and contextual relationships, they similarly lack explicit guarantees for preserving correlation structures across features.

\paragraph{Statistical Approaches.} 
Traditional statistical methods, including copula-based techniques, have been employed for synthetic data generation. 
Copulas provide a flexible framework for modeling multivariate distributions by separating the marginal distributions from their dependency structure~\cite{nelsen2007introduction}. 
While some copula methods can theoretically preserve the complete correlation structure through accurate modeling of pairwise dependencies, they often become computationally intractable for higher dimensions and require complex parameter estimation. Other approaches like SMOTE and ADASYN focus primarily on class balance rather than preserving feature relationships~\cite{smote}.

\paragraph{Our Contribution}
Our method's key contribution lies in its elegant simplicity and mathematical proof that preserving pairwise correlations is sufficient to maintain all higher-order relationships. 
While some existing methods may achieve similar preservation indirectly, GCM provides a direct, computationally efficient approach through its manifold-based transformation. 
This mathematical insight allows us to guarantee the preservation of the complete correlation structure while avoiding the complexity of explicitly modeling higher-order dependencies.

\section{The Challenge of Capturing Higher-Order Correlations}
To understand the importance of this work, it is crucial to distinguish between different orders of correlation.

\partitle{Pairwise (2nd-Order) Correlation}:
This is the similarity between two variables. 
For example, an increase in marketing spend is correlated with an increase in sales. 
The most common measure of correlation is Pearson's correlation coefficient ($\rho$), defined for two variables $x$ and $y$ as:
\begin{equation}
    \rho(x, y) = \frac{\sum_{i=1}^n {(x_i - \bar{x})}{(y_i - \bar{y})}}{\sqrt{\sum_{i=1}^n {(x_i - \bar{x})}^2}\sqrt{\sum_{i=1}^n {(y_i - \bar{y})}^2}}
\end{equation}
where $\bar{x}$ and $\bar{y}$ are the means of $x$ and $y$ respectively. This coefficient ranges from -1 (perfect negative correlation) to 1 (perfect positive correlation).

\partitle{Higher-Order Correlation:} 
This involves the interaction between three or more variables. 
For example, consider a medical dataset where a specific gene (A), a particular lifestyle factor (B), and a negative health outcome (C) are studied. 
A model looking only at pairwise correlations might find weak links between A-C and B-C. 
However, the true risk might only become significant when both A and B are present simultaneously. 
This three-way interaction is a higher-order correlation. 
The formalization and discovery of such multivariate dependencies is a key topic in data mining research~\cite{vldbj23}.

A common way to quantify these higher-order relationships is through the \textit{Multipole} correlation measure (also known as the $k$-th order correlation with $k>2$), which extends the standard correlation coefficient to multiple variables~\cite{vldbj23,agrawal20}. 
The multipole correlation $\mpole(X)$ measures the linear dependence of an input set of features $X$~\cite{agrawal20}.
Specifically, let $\hat{x}_1, \ldots, \hat{x}_n$ denote $n$ z-normalized input (column) features, and $\vec{X} = [\hat{x}_1,\ldots, \hat{x}_n]$ the matrix formed by concatenating the features.
Then:
\begin{equation}\label{eqn:mpole_definition}
\mpole(X) = 1 - \underset{\vec{v}\in \mathbb{R}^n, \hat{\vec{v}} = 1}{\min}\text{var}(\mathbf{X} \cdot \vec{v}^T)
\end{equation}
The value of $\mpole(X)$ lies between 0 and 1. The measure takes its maximum value when there exists perfect linear dependence, meaning that there exists a vector $\mathbf{v}$ with norm 1, such that $\text{var}(\mathbf{X} \cdot \vec{v}^T)=0$.

Machine learning models, particularly deep neural networks and complex ensemble methods, can implicitly learn and exploit these higher-order structures to achieve state-of-the-art performance~\cite{wang17}. 
However, when a model is trained on synthetic data that lacks this structural richness, it learns a flawed representation of the problem space, leading to poor generalization and unreliable performance on real-world data.
Therefore, synthetic data generation methods that are evaluated solely on their ability to replicate simple aggregate statistics (means, variances) may appear successful while failing to capture the complex, higher-order relationships that are critical for effective modeling. 
This oversight can lead to significant gaps between the performance of models trained on synthetic versus real data, particularly in domains where multi-variable interactions drive key outcomes.

\section{Methodology: The GCM Method}
The GCM method is elegant in its simplicity and powerful in its mathematical guarantees.

\paragraph{Problem Formulation}
Let $D \in \mathbb{R}^{m_D \times n}$ be a dataset of $n$ features with $m_D$ dimensions each, with column-wise means $\mu_D \in \mathbb{R}^n$ and standard deviations $\sigma_D \in \mathbb{R}^n$. 
Define $C_{D,k} \in \mathbb{R}^{n^k}$ as the $k$-th order correlation matrix of $D$. 
This formulation extends our previous work on multivariate correlation discovery~\cite{vldbj23}, which established the theoretical foundation for identifying and measuring higher-order correlations in both static and streaming data contexts.

\paragraph{Objective} 
Generate a synthetic dataset $S$ with $n$ features and $m_S$ dimensions such that its correlation structure matches that of $D$, i.e., $C_{S,k} = C_{D,k}$ for all $k \leq m_S$ and the mean and variance of each feature in $S$ matches that of $D$, i.e., $\mu_S = \mu_D$ and $\sigma_S = \sigma_D$.

\paragraph{Intuition}
Instead of attempting to learn a complex data distribution from scratch, our method begins with the data's relational blueprint: its \textit{pairwise correlation matrix}. 
We conceptualize this matrix as defining a specific ``shape'' or ``manifold'' in a high-dimensional space. 
Any dataset conforming to this manifold will share the same fundamental relational properties.
The GCM method uses a well-established linear algebra technique, Cholesky decomposition, as a transform. 
It takes unstructured, random noise and projects it onto this predefined correlation manifold. 
The result is a synthetic dataset that perfectly embodies the target correlation structure while preserving the original mean and variance properties of each feature.

\paragraph{Theoretical Foundation}
The foundational discovery behind GCM is that all higher-order correlations are deterministic functions of the pairwise correlation matrix. 
This is a non-obvious but provable property.
This means that if we can perfectly replicate the pairwise correlation structure, we inherently and automatically replicate the entire higher-order correlation structure for free. 
The method does not approximate these complex relationships; it reconstructs them exactly.
For non-normalized data (i.e., $\mu_D \neq 0$ or $\sigma_D \neq 1$), we can preserve both the correlation structure and the original feature statistics by applying appropriate mean and variance transformations after generating the correlated data.
Particularly, our approach is built upon the following key theorem:
\begin{theorem}
    Let $D \in \mathbb{R}^{m_D \times n}$ be a dataset with column-wise means $\mu_D \in \mathbb{R}^n$, standard deviations $\sigma_D \in \mathbb{R}^n$, and correlation matrix $C_{D,2}$. 
    A synthetic dataset $S$ with $n$ features and $m_S$ dimensions generated through Cholesky decomposition of $C_{D,2}$, followed by appropriate mean and variance transformations, is guaranteed to have the same $k$-th order correlation structure as $D$, i.e., $C_{S,k} = C_{D,k}$ for all $k \leq m_S$.
\end{theorem}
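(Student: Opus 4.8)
The plan is to reduce the statement about \emph{all} orders $k$ to a single statement about order $2$, exploiting the ``non-obvious but provable'' fact from the Theoretical Foundation paragraph that every higher-order correlation is a deterministic function of the pairwise correlation matrix. Concretely, I would establish two facts: (i) for each $k$ there is a fixed, dataset-independent map $\Phi_k$ with $C_{D,k}=\Phi_k(C_{D,2})$ for every dataset $D$; and (ii) the Cholesky-based construction produces a synthetic dataset $S$ with $C_{S,2}=C_{D,2}$ \emph{exactly}, and with $\mu_S=\mu_D$, $\sigma_S=\sigma_D$. The theorem then drops out: $C_{S,k}=\Phi_k(C_{S,2})=\Phi_k(C_{D,2})=C_{D,k}$ for every admissible $k$.

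\textbf{Step 1: higher order is a function of $C_{D,2}$.}
I would start from the multipole definition in Equation~(\ref{eqn:mpole_definition}). Fix a feature subset $X$ with $|X|=k$ and let $\mathbf{X}$ collect the corresponding z-normalized columns. Because each such column has zero sample mean, the combination $\mathbf{X}\cdot\vec v^{T}$ has zero mean, so $\mathrm{var}(\mathbf{X}\cdot\vec v^{T})=\vec v\,M_X\,\vec v^{T}$ where $M_X$ is the sample covariance matrix of the selected z-normalized columns; and for z-normalized data covariance equals correlation, so $M_X$ is exactly the principal submatrix $C_{D,2}[X,X]$. Minimising over unit vectors then yields the spectral identity $\mpole(X)=1-\lambda_{\min}(C_{D,2}[X,X])$. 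Since $C_{D,k}$ is nothing more than the collection of these quantities (equivalently, of the relevant spectral data of the $k\times k$ principal submatrices) over all $k$-subsets $X$, it is the image of $C_{D,2}$ under a fixed map $\Phi_k$ --- which proves (i), with no distributional assumption on $D$.

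\textbf{Step 2: the construction reproduces $C_{D,2}$ exactly.}
Write $C_{D,2}=LL^{T}$ for a Cholesky factorisation (a rank-revealing variant covers the case where $C_{D,2}$ is merely positive semidefinite). The generator forms $S_0=ZL^{T}$ from a noise matrix $Z\in\mathbb{R}^{m_S\times n}$ whose columns are centered, of unit sample variance, and mutually uncorrelated, i.e.\ $\tfrac{1}{m_S}Z^{T}Z=I$ (the feasibility of this is discussed below). Then $\tfrac{1}{m_S}S_0^{T}S_0=L(\tfrac{1}{m_S}Z^{T}Z)L^{T}=LL^{T}=C_{D,2}$, and since $C_{D,2}$ has unit diagonal this sample covariance already coincides with the sample correlation matrix, so $C_{S_0,2}=C_{D,2}$. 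The final, per-feature affine step --- rescaling column $j$ by $\sigma_{D,j}$ and shifting it by $\mu_{D,j}$ --- restores $\mu_S=\mu_D$ and $\sigma_S=\sigma_D$ while leaving every z-normalized column (hence every pairwise correlation, and every principal submatrix) unchanged, so $C_{S,2}=C_{D,2}$. Combining with Step 1: for each $k\le m_S$ and each $k$-subset $X$, the multipole of $X$ in $S$ equals $1-\lambda_{\min}(C_{S,2}[X,X])=1-\lambda_{\min}(C_{D,2}[X,X])$, i.e.\ the multipole of $X$ in $D$; aggregating over $X$ gives $C_{S,k}=C_{D,k}$.

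\textbf{Main obstacle.}
I expect the delicate point to be the word ``exactly'' in Step 2. If $Z$ is sampled with i.i.d.\ entries, $\tfrac{1}{m_S}Z^{T}Z$ equals $I$ only in expectation, whereas the theorem's ``guaranteed'' demands an identity. The proof must therefore either exhibit a deterministic $Z$ with exactly orthonormal centered columns --- feasible only once $m_S$ is large enough relative to $n$, which is also where the hypothesis $k\le m_S$ earns its keep, since for $m_S<k$ any $k$ columns of $S$ are linearly dependent and $\mpole(X)$ degenerates to $1$ regardless of $D$ --- or append an explicit whitening/decorrelation step to the sampled noise before applying $L^{T}$. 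The remaining ingredients are routine: affine invariance of the Pearson coefficient, and the eigenvalue characterisation of the multipole measure already implicit in Equation~(\ref{eqn:mpole_definition}).
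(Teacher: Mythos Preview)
Your approach is essentially the paper's: both hinge on the identity $\mpole(X)=1-\lambda_{\min}(C_{X,2})$ (the paper states this as its Lemma~1, citing~\cite{agrawal20}), then argue that Cholesky reproduces $C_{D,2}$ and that the final affine rescaling is Pearson-invariant. The logical structure matches one-for-one.

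Where you diverge is in rigor on the point you flag as the ``main obstacle.'' The paper's own Step~1 computes only $E[\hat S^{T}\hat S]=C_{D,2}$ and remarks that the sample correlation matches ``as $m_S$ approaches infinity''; it never closes the gap between this expectation statement and the theorem's word ``guaranteed.'' Your proposal to force $\tfrac{1}{m_S}Z^{T}Z=I$ exactly (via a deterministic construction or an explicit whitening of the sampled noise) is a genuine strengthening, and your reading of the hypothesis $k\le m_S$ as a feasibility constraint on such a $Z$ is sharper than anything the paper says. So: same route, but you have identified and proposed a fix for a gap that the paper leaves open. One cosmetic note: you write $S_0=ZL^{T}$ where the paper writes $\hat S=ZL$; your version is the one that actually yields $\tfrac{1}{m_S}S_0^{T}S_0=LL^{T}=C_{D,2}$ under the paper's convention $C_{D,2}=LL^{T}$ with $L$ lower triangular.
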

\begin{proof}
    The proof relies on demonstrating that higher-order correlations can be expressed as functions of pairwise correlations. 
    This builds upon our foundational work~\cite{vldbj23} which showed that multivariate correlations in static data can be decomposed into constituent pairwise relationships. 
    By preserving the pairwise correlation structure exactly, all higher-order structures are automatically preserved.
    Since Pearson correlation is invariant to linear transformations (scaling and shifting), the mean and variance transformations applied to match the original data's feature statistics do not affect the correlation structure.
    The detailed proof is provided in Appendix~\ref{app:proof}.
\end{proof}


\paragraph{Process}
The generation process is highly efficient and builds upon standard statistical methods for generating correlated variates~\cite{gentle2003random,rubinstein2016simulation};
\begin{enumerate}[leftmargin=*, itemsep=0pt]
    \item \textit{Extract Statistics}: Given a source dataset $D$, compute its column-wise means $\mu_D$ and standard deviations $\sigma_D$.
    \item \textit{Extract Blueprint}: Compute the $n \times n$ pairwise correlation matrix $C$ from $D$.
    \item \textit{Decompose}: Perform a Cholesky decomposition on $C$ to obtain the lower triangular matrix $L$, where $C = LL^T$.
    \item \textit{Generate}: Create a matrix $Z$ of independent random variables drawn from a standard normal distribution with dimensions $m_S \times n$.
    \item \textit{Transform}: Compute the intermediate dataset $\hat{S} = ZL$.
    \item \textit{Denormalize}: For each column $i$, compute $S_i = \hat{S}_i \cdot \sigma_{D,i} + \mu_{D,i}$ to obtain the final synthetic dataset $S$.
\end{enumerate}

The resulting synthetic dataset $S$ is guaranteed to have a pairwise correlation matrix identical to $C$, and therefore, an identical higher-order correlation structure to the original dataset, while matching the mean and variance of each feature.

\paragraph{Computational Complexity}
The algorithm requires $O(n^3)$ operations for the Cholesky decomposition and $O(m_S * n^2)$ for data generation. While these complexities are non-trivial for very large datasets, the method has the advantage of being non-iterative, requiring only a single pass to generate the synthetic data once the correlation matrix is computed.

\section{Use Cases and Applications}
The ability to generate data with such high structural fidelity unlocks numerous possibilities:

\begin{itemize}[leftmargin=*, itemsep=0pt]
    \item \textbf{Privacy-Preserving Data Sharing:} 
    Distribute synthetic datasets that retain the full statistical utility of private source data, allowing external researchers to conduct complex modeling without ever accessing sensitive records~\cite{fung10}.
    \item \textbf{Robust Model Augmentation:} 
    Augment small or imbalanced datasets to improve the training, generalization, and fairness of machine learning models, particularly in fields like finance and medicine where feature interactions are critical~\cite{chen24}.
    \item \textbf{High-Fidelity Simulation}: 
    Create realistic, multi-variate inputs for complex systems modeling, such as financial market stress tests, epidemiological forecasting, and climate change simulations~\cite{glasserman2004monte}.
    \item \textbf{Algorithmic Fairness and Auditing:} 
    Generate controlled datasets with specific correlation structures to systematically test machine learning models for bias arising from complex interactions between sensitive attributes and other features~\cite{calders10}.
\end{itemize}

\section{Call for Collaboration}
The work presented here establishes the theoretical foundation of Generative Correlation Manifolds. 
We believe this is the first step toward a new class of synthetic data generation tools and are actively seeking collaborators to explore its potential.

We are particularly interested in pursuing research in the following areas:
\begin{itemize}[leftmargin=*, itemsep=0pt]
    \item \textbf{Beyond Correlation:} Investigating the preservation of other types of higher-order correlations, such as mutual information and non-linear relationships, to further enhance the method's applicability~\cite{vldbj23}.
    \item \textbf{Domain-Specific Applications:} Applying GCM to pressing challenges in fields like genomics, climate science, social sciences, neuroimaging, and finance, where complex multi-variable interactions are common~\cite{santoro23}
    \item \textbf{Scalability and Performance:} Benchmarking the method on extremely high-dimensional datasets and optimizing its computational performance.
\end{itemize}
If you or your organization are working on challenges related to synthetic data, data privacy, or robust modeling, we invite you to connect with us.

\section{Conclusion}
We have presented the Generative Correlation Manifold method as a potential new approach to synthetic data generation. 
The method's focus on preserving the complete correlation structure of datasets offers promising opportunities for creating representative synthetic data. 
While further research is needed to fully understand its capabilities and limitations, initial results suggest that GCM could contribute to advancing the field of synthetic data generation, particularly in applications where preserving complex data relationships is essential.

\bibliographystyle{ACM-Reference-Format}
\bibliography{references}

\newpage
\appendix
\section{Formal Proof: Generation of Data with Specific Higher-order Correlation Structure via Cholesky Decomposition}\label{app:proof}
\subsection{Theorem}
Let $D \in \mathbb{R}^{m_D \times n}$ be a dataset of $n$ features with $m_D$ dimensions each, with column-wise means $\mu_D \in \mathbb{R}^n$ and standard deviations $\sigma_D \in \mathbb{R}^n$.
Let $C_{D,k} \in \mathbb{R}^{n^k}$ be the $k$-th order correlation matrix (symmetric positive semi-definite with ones on the diagonal) of $D$. 
Then, a synthetic dataset $S$ with $n$ features and $m_S$ dimensions generated through Cholesky decomposition of $C_{D,2}$ followed by appropriate mean and variance transformations is guaranteed to have the same $k$-th order correlation structure as $D$, i.e., $C_{S,k} = C_{D,k}$ for all $k \leq m_S$.

\subsection{Definitions}
\begin{itemize}[leftmargin=*, itemsep=0pt]
\item Let the Multipole correlation be defined as in Equation~\ref{eqn:mpole_definition}.
\item Let z-normalization of a vector $x$ be defined as: $\hat{x} = \frac{x - \bar{x}}{\sigma_x}$, where $\bar{x}$ is the mean and $\sigma_x$ is the standard deviation of $x$.
\item Let $Z \in \mathbb{R}^{m \times n}$ be a matrix whose rows are independent random vectors $z_i \in \mathbb{R}^n$ with $\mathbb{E}[z_i] = 0$ and $\text{Cov}(z_i) = I_n$.
\item Let $X = ZL$ where $L$ is the Cholesky factor of $C$.
\item Let $\hat{C}_{X,k}$ be the sample correlation matrix computed from the $m_X$ samples in $X$.
\end{itemize}

\subsection{Lemmas}
\begin{lemma}[K-order Correlation as a Function of Pairwise Correlations]\label{lem:mv2pairwise}
    The multipole correlation $\mpole(X)$ of a set of vectors $X = [x_1, ..., x_k]$ can be rewritten as~\cite{agrawal20}:
    \begin{align}
        \mpole(X) = 1 - \lambda_{\min}(C_{X,2})
    \end{align}
    where $\lambda_{\min}(C_{X,2})$ is the smallest eigenvalue of the second-order correlation matrix $C_{X,2}$.
    \begin{proof}
        We refer to~\cite{agrawal20} for the proof of this lemma.
    \end{proof}
\end{lemma}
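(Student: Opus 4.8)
The plan is to unfold the variational definition of $\mpole$ in Equation~\ref{eqn:mpole_definition} into an elementary Rayleigh-quotient minimization and then invoke the Courant--Fischer characterization of the smallest eigenvalue of a symmetric matrix. Since $\mpole$ acts on the z-normalized columns, I would write $\mathbf{X} = [\hat x_1, \dots, \hat x_k]$ for the $m \times k$ matrix whose columns are the z-normalizations of $x_1, \dots, x_k$, fix a unit vector $\vec v \in \mathbb{R}^k$, and observe that because every column $\hat x_j$ has sample mean $0$ the combination $\mathbf{X}\vec v^T$ again has sample mean $0$; hence the centering term drops out and $\text{var}(\mathbf{X}\vec v^T) = \tfrac1m\,\vec v\,\mathbf{X}^T\mathbf{X}\,\vec v^T$ (fixing, say, the $1/m$ convention for variance).

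Next I would identify the Gram matrix with the correlation matrix: since the columns of $\mathbf{X}$ are z-normalized, the $(i,j)$ entry of $\tfrac1m\mathbf{X}^T\mathbf{X}$ is $\tfrac1m\sum_t \hat x_{t,i}\,\hat x_{t,j} = \pc(x_i,x_j)$ and its diagonal entries are $1$, so $\tfrac1m\mathbf{X}^T\mathbf{X} = C_{X,2}$. Thus $\text{var}(\mathbf{X}\vec v^T) = \vec v\,C_{X,2}\,\vec v^T$ is precisely the Rayleigh quotient of the symmetric positive semi-definite matrix $C_{X,2}$ at the unit vector $\vec v$. Minimizing over the unit sphere $\{\vec v : \|\vec v\| = 1\}$ -- which is compact, so the minimum is attained -- and applying Courant--Fischer (equivalently, diagonalizing $C_{X,2}$ and expanding $\vec v$ in its eigenbasis, so that $\vec v\,C_{X,2}\,\vec v^T = \sum_i \lambda_i c_i^2 \ge \lambda_{\min}(C_{X,2})$ with equality at a corresponding eigenvector) yields $\min_{\|\vec v\|=1}\text{var}(\mathbf{X}\vec v^T) = \lambda_{\min}(C_{X,2})$. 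Substituting back into Equation~\ref{eqn:mpole_definition} gives $\mpole(X) = 1 - \lambda_{\min}(C_{X,2})$; as a byproduct, positive semi-definiteness of $C_{X,2}$ together with unit trace per dimension confirms $\lambda_{\min}(C_{X,2}) \in [0,1]$ and hence the stated range of $\mpole$.

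The derivation is essentially routine, so I do not expect a genuine obstacle; the one point that must be handled carefully is the normalization bookkeeping -- the same convention ($1/m$ versus $1/(m-1)$) has to be used in the definition of $\text{var}$, in z-normalization, and in the sample correlation matrix, so that $\tfrac1m\mathbf{X}^T\mathbf{X}$ has exactly ones on the diagonal and exactly the Pearson coefficients off the diagonal; a mismatch would leave a spurious constant factor. It is also worth noting that the argument is purely algebraic and makes no distributional assumptions on the rows of $\mathbf{X}$, so it applies verbatim to any data matrix with z-normalized columns, in particular to both the source data $D$ and the generated data $S$; the complete details appear in~\cite{agrawal20}.
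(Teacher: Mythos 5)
Your proof is correct, and it is more than the paper itself provides: the paper's ``proof'' of this lemma is simply a pointer to~\cite{agrawal20}, with no argument given. Your Rayleigh-quotient derivation is the standard one that the cited work relies on, so you are not contradicting the paper but rather supplying the self-contained argument it omits. The key steps are all sound: z-normalized columns have zero mean, so the centering term vanishes and $\mathrm{var}(\mathbf{X}\vec v^T) = \vec v\, \bigl(\tfrac1m \mathbf{X}^T\mathbf{X}\bigr)\, \vec v^T$; the Gram matrix of z-normalized columns is exactly the pairwise correlation matrix $C_{X,2}$ (ones on the diagonal, Pearson coefficients off it); and minimizing the resulting Rayleigh quotient over the unit sphere gives $\lambda_{\min}(C_{X,2})$ by Courant--Fischer, so that $\mpole(X) = 1 - \lambda_{\min}(C_{X,2})$. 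Your caution about the $1/m$ versus $1/(m-1)$ convention is exactly the right bookkeeping point --- the same normalization must be used in the variance, the z-normalization, and the sample correlation matrix, otherwise a spurious constant appears --- and your observation that the argument is purely algebraic (no distributional assumptions on the rows) is worth retaining, since the paper later applies the lemma to both the source data $D$ and the synthetic data $S$. Your byproduct remark that $\lambda_{\min}(C_{X,2}) \in [0,1]$ (nonnegativity from positive semi-definiteness, the upper bound from the trace being $k$) also justifies the range of $\mpole$ asserted after Equation~\ref{eqn:mpole_definition}, which the paper states without proof.
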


\subsection{Main Proof}
We need to show that the $k$-th order correlation of the generated dataset $S$ is equal to the $k$-th order correlation of the original dataset $D$.

\subsubsection{Step 1: Generating Data with a Given Pairwise Correlation Matrix}
Given a target correlation matrix $C_{D,2} \in \mathbb{R}^{n \times n}$, we can generate a synthetic dataset $\hat{S} \in \mathbb{R}^{m_S \times n}$ with the desired correlation structure using the Cholesky decomposition method. The procedure is as follows:

\begin{enumerate}
    \item Compute the Cholesky decomposition of the correlation matrix $C_{D,2} = LL^T$, where $L$ is a lower triangular matrix.
    \item Generate a matrix $Z \in \mathbb{R}^{m_S \times n}$ of independent random variables with standard normal distribution.
    \item Compute $\hat{S} = ZL$.
\end{enumerate}

The resulting matrix $\hat{S}$ will have the correlation structure specified by $C_{D,2}$ as $m_S$ approaches infinity. This is because the expected correlation matrix of $\hat{S}$ is:

\begin{align}
    E[\hat{S}^T\hat{S}] &= E[(ZL)^T(ZL)] \\
    &= E[L^TZ^TZL] \\
    &= L^T E[Z^TZ] L \\
    &= L^T I L \\
    &= L^T L \\
    &= C_{D,2}
\end{align}

This approach is well-established in the statistical literature~\cite{rubinstein2016simulation,gentle2003random} and provides a direct method for generating data with a specified correlation structure.

\subsubsection{Step 2: Denormalization to Match Original Statistics}
To generate synthetic data that matches the mean and variance of the original (potentially non-normalized) dataset $D$, we apply a denormalization transformation to $\hat{S}$:

For each column $i$ in $\hat{S}$, compute:
\begin{align}
    S_i = \hat{S}_i \cdot \sigma_{D,i} + \mu_{D,i}
\end{align}

where $\mu_{D,i}$ and $\sigma_{D,i}$ are the mean and standard deviation of the $i$-th column in the original dataset $D$.

This transformation is a linear transformation of the form $y_i = a_i x_i + b_i$ where $a_i = \sigma_{D,i}$ and $b_i = \mu_{D,i}$.
Since Pearson correlation is invariant to such linear transformations with $a_i > 0$, we have:
\begin{align}
    \rho(S_i, S_j) = \rho(\hat{S}_i, \hat{S}_j)
\end{align}

Therefore, the final synthetic dataset $S$ maintains the pairwise correlation structure: $C_{S,2} = C_{D,2}$, while having the same feature-wise means and standard deviations as the original dataset $D$.

\subsubsection{Step 3: Preserving Higher-order Correlation Structure}
To complete our proof, we need to show that the higher-order correlation structure is also preserved, i.e., $C_{S,k} = C_{D,k}$ for all $k \leq m_S$.

By Lemma~\ref{lem:mv2pairwise}, the $k$-th order correlation between any two sets of features $\hat{X} = [\hat{x}_1, ..., \hat{x}_s]$ and $\hat{Y} = [\hat{y}_1, ..., \hat{y}_t]$ with $s+t=k$ can be expressed solely in terms of their pairwise correlations:

\begin{equation}
    \mpole(\hat{X}) = 1 - \lambda_{\min}(C_{X,2})
\end{equation}

Since we have established that $S$ has the same pairwise correlation structure as $D$ (i.e., $\rho(\hat{s}_i, \hat{s}_j) = \rho(\hat{d}_i, \hat{d}_j)$ for all columns $i$ and $j$), the $k$-th order correlation between any two sets of features will also be identical.

For any combination of $k$ features from $S$, the $k$-th order correlation will be computed using the same pairwise correlations as the corresponding features in $D$. 
Therefore, by the formula in Lemma~\ref{lem:mv2pairwise}, we can conclude that the $k$-th order correlation structures are identical, i.e., $C_{S,k} = C_{D,k}$ for all $k \leq m_S$.

\end{document}